\newcommand{\bS}[1]{{\boldsymbol{#1}}}
\long\def\comment#1{}
\newfont{\bbb}{msbm10 scaled 700}
\newfont{\bb}{msbm10 scaled 1000}
\newtheorem{theorem}{Theorem}[section]
\newtheorem{corollary}{Corollary}[theorem]
\newtheorem{lemma}[theorem]{Lemma}
\title{CHANNEL REDUNDANCY AND OVERLAP IN CONVOLUTIONAL NEURAL NETWORKS WITH CHANNEL-WISE NNK GRAPHS}
\name{David Bonet$^{\star}$ \qquad Antonio Ortega$^{\dagger}$ \qquad Javier Ruiz-Hidalgo$^{\star}$ \qquad Sarath Shekkizhar$^{\dagger}$}
\address{$^{\star}$ Universitat Politècnica de Catalunya, Barcelona, Spain \\ 
         $^{\dagger}$ University of Southern California, Los Angeles, USA}
\begin{document}
\ninept
\maketitle
\begin{abstract}


Feature spaces in the deep layers of convolutional neural networks (CNNs) are often very high-dimensional and difficult to interpret. However, convolutional layers consist of multiple channels that are activated by different types of inputs, which suggests that more insights may be gained by studying the channels and how they relate to each other. In this paper, we first analyze theoretically channel-wise non-negative kernel (CW-NNK) regression graphs, which allow us to quantify the overlap between channels and, indirectly, the intrinsic dimension of the data representation manifold. We find that redundancy between channels is significant and varies with the layer depth and the level of regularization during training. Additionally, we observe that there is a correlation between channel overlap in the last convolutional layer and generalization performance. Our experimental results demonstrate that these techniques can lead to a better understanding of deep representations. 

\end{abstract}
\begin{keywords}
Convolutional neural networks, channel redundancy, graph construction, intrinsic dimension, interpretability
\end{keywords}
\section{Introduction}
\label{sec:intro}

Convolutional neural networks (CNNs) are among the most successful models for supervised classification \cite{tan2019efficientnet, radosavovic2020designing}.
In general, CNNs sequentially process inputs through blocks of convolutional layers followed by non-linearity, normalization and pooling layers, and fully connected layers with  softmax activation for classification. Each convolutional layer consists of multiple convolutional filters that are applied in parallel to a common input. Following each layer we encounter very high dimensional data representations, formed by the aggregation of multiple \textit{channels}, where each channel is the part of the output that corresponds to a single convolutional filter.
It is generally accepted that initial layers encode basic features \cite{lecun2015deep}, e.g., edges and corners, while the features learned in the deeper layers are harder to interpret. Hence, the gap in the application of deep learning models in more sensitive practices such as medicine or defense remains significant and our understanding of how these models generalize to unseen data points is still limited \cite{zhang2021understanding}.


In this paper, we use graphs to study the geometric properties of the training data representations and achieve a better understanding of CNNs.  
Interest in the geometric properties of high dimensional datasets has led to various attempts to define the \textit{intrinsic dimension} (ID) of the data.
While multiple definitions are possible \cite{campadelli2015intrinsic}, the general idea is that, for a given type of data representation, ID is related to the number of parameters needed to describe a dataset accurately \cite{bennett1969intrinsic}. 
Measuring ID helps to understand how neural networks learn to transform data effectively \cite{ansuini2019intrinsic}, even when there is significant overparametrization. Further, specific local ID characteristics can be associated with desirable properties, such as adversarial robustness \cite{ma2018characterizing} or  generalization for noisy labels \cite{ma2018dimensionality}.
Recently, \cite{ansuini2019intrinsic, recanatesi2019dimensionality} study ID for several CNN architectures with different training techniques and show
that the IDs of deep representations are notably smaller than the nominal dimension of the corresponding features. More generally, these works observe that the ID increases in the initial layers and then decreases closer to the final classification layer. \cite{ansuini2019intrinsic, recanatesi2019dimensionality} attribute the initial increase in ID in the early layers 
to the fact that these layers perform low-level  pre-processing and feature extraction, many of which are task-independent. However, in the later layers, only task-relevant features are learned,  and the ability of the model to compress the dimensionality of data representations in these last layers is indicative of the model's generalization.

Given the large number of channels that comprise each layer of state-of-the-art CNNs, and the computation and memory they require, there has been a growing interest in understanding how these channels are related to each other \cite{shang2016understanding, wang2020orthogonal, kahatapitiya2021exploiting}. In particular, redundancy between channels has been studied based on various pairwise filter similarities, detecting pairs of filters with opposite phase \cite{shang2016understanding}, using guided back-propagation patterns \cite{wang2020orthogonal}, and analyzing filter correlation \cite{kahatapitiya2021exploiting}. Channel similarity estimates can be used to compress models by pruning channels that are similar in a layer \cite{he2017channel, jaderberg2014speeding} or by leveraging the structural redundancy of the channels in a layer \cite{wang2021convolutional}.
Orthogonality constraints in convolutional layers have also been proposed to de-correlate channels during training \cite{wang2020orthogonal, kahatapitiya2021exploiting, rodriguez2016regularizing}. 


In this work, we study channel redundancy directly using graphs constructed from training data, rather than relying on a more indirect metric, such as filter similarity  \cite{shang2016understanding, wang2020orthogonal, kahatapitiya2021exploiting}. 
We propose a novel way to estimate the ID of data representations from empirical data
based on the level of information overlap across channels.
To this aim, we construct and study channel-wise non-negative kernel (NNK) regression graphs \cite{shekkizhar2020graph} using data representations at the output of each convolutional layer.
NNK graphs take advantage of the local geometry to sparsify a $K$-nearest neighbor (KNN) graph. 
Further, NNK neighborhoods are stable for large enough $K$, and their size is indicative of the ID of the manifold the data belongs to \cite{shekkizhar2020graph}. 
This method has been shown to provide advantages for semi-supervised learning, image representation \cite{shekkizhar2020efficient}, and label interpolation and generalization estimation in neural networks \cite{shekkizhar2020deepnnk}. 
Graph properties have also been used to interpret deep neural network performance \cite{GriOrtGir20182}, latent space geometry \cite{LasBonHacGriTanOrt2020, LasGriOrt2021laplacian}, and to improve model robustness \cite{LasGriOrt2021representing}. However, none of these papers used channel-wise information or the NNK construction. 
We first proposed a channel-wise approach for NNK graph constructions in \cite{bonet2021channel}, where information from multiple channels was used to estimate generalization and perform early stopping without requiring a validation set while relying on a task performance metric (label interpolation).
In this paper, we 
analyze the conditions that guarantee that two nodes that are NNK graph neighbors in two channels, i.e., overlap, will also be neighbors in an NNK graph constructed by aggregating the two channels. These results allow us to quantify the level of similarity between data points (across multiple channels) without a combinatorial explosion in the number of graphs to be constructed.
From our experimental results, we find that channel redundancy in CNNs is significant and varies according to the layer depth and the level of training regularization. Our method for assessing the redundancy in CNNs can be leveraged to improve or create new channel pruning techniques to obtain more compact and efficient models. We also observe that CW-NNK overlap in the last convolutional layer is strongly correlated with generalization performance. Our measure of overlap can be combined with the recently proposed channel-wise early stopping method \cite{bonet2021channel} to improve generalization estimation and efficiency for large scale problems.

\section{Preliminaries}
\label{sec:pre}

\subsection{Notation}
\label{ssec:notation}

We denote scalars, vectors, and matrices using lowercase (e.g., $x$ and $\theta$), lowercase bold (e.g., $\bS{x}$ and $\bS{\theta}$), and uppercase bold (e.g., $\bS{K}$ and $\bS{\Phi}$) letters, respectively. Subscript $i$ denotes the $i^{\text{th}}$ data point associated with the feature vector $\bS{x}_i$.
We use a double subscript to index and denote a channel and related measures. For example, a feature vector $\bS{x}_i \in \mathbb{R}^D$ obtained as the concatenation of $C$ channel subvectors $\bS{x}_{i_c} \in \mathbb{R}^{D_c}$ where $\sum_{c=1}^{C} D_c = D$.
The indices of the KNN and NNK neighbors of a data point $\bS{x}_i$ in the aggregate space are denoted by sets $\mathcal{S}(\bS{x}_i)$ and $\mathcal{N}(\bS{x}_i)$, while $\mathcal{S}(\bS{x}_{i_c})$ and $\mathcal{N}(\bS{x}_{i_c})$ are the neighborhood sets for the channel subvector $\bS{x}_{i_c}$, 
respectively.

\subsection{Non-Negative Kernel (NNK) regression graphs}
\label{ssec:nnk}

Given a set of $N$ data points represented by feature vectors $\bS{x} \in \mathbb{R}^D$, a graph is constructed by 
connecting each data point (node) to similar data points, so that the weight of an edge  
between two nodes is based on the similarity of the data points, with the absence of an edge (a zero weight) 
denoting least similarity. 
Conventionally, one defines similarity between data points using positive definite kernels \cite{aronszajn1950theory} such as the Gaussian kernel with bandwidth $\sigma$: 
\begin{equation}
\label{eq:gaussian}
    k(\bS{x}_i,\bS{x}_j) = \text{exp} \left( -\frac{\|\bS{x}_i - \bS{x}_j\|^2}{2 \sigma^2} \right)  
\end{equation}
Unlike weighted KNN and $\epsilon$-neighborhood graphs \cite{chvatal1977aggregations} that are sensitive to the choice of hyperparameters $K/\epsilon$, NNK graphs \cite{shekkizhar2020graph} are suggested as a principled approach to graph construction based on a signal representation view. 


An advantage of NNK over KNN, which can be viewed as a thresholding-based representation, is its robustness to the choice of parameter $K$. 
While KNN is still used as an initialization, NNK performs a further optimization akin to orthogonal matching pursuits  \cite{tropp2007signal} in kernel space, resulting in a \emph{stable} representation (even as $K$ chosen for initialization grows, the number of NNK neighbors remains nearly constant) with the added advantage of having a \emph{geometric} interpretation based on the Kernel Ratio Interval (KRI) theorem \cite{shekkizhar2020graph}:
\begin{equation} \label{eq:kri}
    \bS{K}_{j,k} < \frac{\bS{K}_{i,j}}{\bS{K}_{i,k}} < \frac{1}{\bS{K}_{j,k}}
\end{equation}
where $\bS{K}_{i,j} = k(\bS{x}_i,\bS{x}_j)$.
In the case of the Gaussian kernel \eqref{eq:gaussian}, considering the edge $\theta_{ij}$ connecting node $i$ and node $j$, we can define a hyperplane normal to the edge direction. The hyperplane divides the space in two, a region $R_{ij}$ that contains $\bS{x}_i$, and its complement $\overline{R}_{ij}$. Then, a third node $k$ will be connected to $i$ only if $\bS{x}_k \in R_{ij}$. 
If $\bS{x}_k \in \overline{R}_{ij}$, $\theta_{ik} = 0$ and we say that $k$ has been eliminated by the hyperplane created by $j$.
The inductive application of the KRI theorem to $\bS{x}_i$ 
leads to 
a convex polytope around node $i$ disconnecting all the other points outside the polytope. 
Intuitively, NNK ignores data that are further away along a \emph{similar} direction as an already chosen point and looks for neighbors in an \emph{orthogonal} direction.

\section{Channel-wise NNK (CW-NNK) graphs}
\label{sec:cwnnk}
In intermediate representations of CNNs, very high-dimensional feature vectors can be seen as the aggregation of multiple subvectors, i.e., channels. 
We propose to construct channel-wise NNK (CW-NNK) graphs, i.e., use the data representations of the dataset in each individual channel of a specific layer to construct $C$ independent graphs.
The geometric properties of CW-NNK graphs, derived in Section \ref{ssec:cw_analysis}, allow us to analyze \emph{efficiently} the amount of overlap between channels. 
Independence between the sets of channel-wise NNK neighbors can serve as an indication that each channel specializes to different features, whereas an overlap between the sets corresponds to redundancy in the features obtained.
Furthermore, studying the size and the similarity between CW-NNK neighborhoods we can develop a better understanding of why the ID can be relatively low even in a high dimensional space. In this context, the average number of NNK neighbors can be viewed  as a characteristic parameter of the data, and 
following the ideas in \cite{campadelli2015intrinsic}, we could say that a smaller number of neighbors (more overlap between channels) is indicative of lower ID. 

\subsection{CW-NNK graphs analysis}
\label{ssec:cw_analysis}


A theoretical analysis that builds on the graphs obtained from the NNK optimization in individual channels allows us to infer relevant properties of the graph that we would obtain in the aggregate high dimensional feature space of multiple channels. 
For simplicity, we consider a scenario with two channels and their aggregate:
\[\bS{x}_i = 
    \begin{bmatrix}
        \bS{x}_{i_1} \\
        \bS{x}_{i_2} \\
    \end{bmatrix} 
    \in \mathbb{R}^D\]
where $\bS{x}_{i_1} \in \mathbb{R}^{D_1}$ and $\bS{x}_{i_2} \in \mathbb{R}^{D_2}$ are the two channels forming $\bS{x}_i$ and $D_1 + D_2=D$, but all the results presented in this section can be extended to the multiple channel case.
We analyze the properties of the set of NNK neighbors $\mathcal{N}(\bS{x}_i)$ in the aggregate space given the neighbor sets of each channel, $\mathcal{N}(\bS{x}_{i_1})$ and $\mathcal{N}(\bS{x}_{i_2})$.  
An extended analysis of this method and its applications is carried out in \cite{xBonet21}.

\begin{theorem}
\label{main_theorem}

If $j \in \mathcal{N}(\bS{x}_{i_1}) \cap  \mathcal{N}(\bS{x}_{i_2})$, and $j \in \mathcal{S}(\bS{x}_i)$, then $j \in \mathcal{N}(\bS{x}_i)$.
\end{theorem}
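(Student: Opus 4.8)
The plan is to exploit the fact that the Gaussian kernel \eqref{eq:gaussian} factorizes across the two channels. Since $\|\bS{x}_i - \bS{x}_j\|^2 = \|\bS{x}_{i_1}-\bS{x}_{j_1}\|^2 + \|\bS{x}_{i_2}-\bS{x}_{j_2}\|^2$, we have $\bS{K}_{i,j} = \bS{K}_{i_1,j_1}\,\bS{K}_{i_2,j_2}$, where $\bS{K}_{i_c,j_c} = k(\bS{x}_{i_c},\bS{x}_{j_c})$ denotes the channel-$c$ kernel. I would argue by contraposition: assuming $j \in \mathcal{S}(\bS{x}_i)$ but $j \notin \mathcal{N}(\bS{x}_i)$, I will exhibit a channel $c$ in which $j$ fails to be an NNK neighbor, contradicting the hypothesis $j \in \mathcal{N}(\bS{x}_{i_1}) \cap \mathcal{N}(\bS{x}_{i_2})$.

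Since $j$ is an aggregate candidate ($j \in \mathcal{S}(\bS{x}_i)$) that is nonetheless disconnected, the geometric characterization from Section~\ref{ssec:nnk} says $j$ is cut off by some selected neighbor $k \in \mathcal{N}(\bS{x}_i)$, i.e.\ $\bS{x}_j \in \overline{R}_{ik}$, which for the Gaussian kernel is the violated-KRI inequality $\bS{K}_{i,j} \le \bS{K}_{i,k}\,\bS{K}_{k,j}$. Substituting the factorization and grouping per channel yields
\[
\bS{K}_{i_1,j_1}\,\bS{K}_{i_2,j_2} \;\le\; \bigl(\bS{K}_{i_1,k_1}\bS{K}_{k_1,j_1}\bigr)\bigl(\bS{K}_{i_2,k_2}\bS{K}_{k_2,j_2}\bigr).
\]
As all kernels are strictly positive, a product of the two per-channel ratios can be $\le 1$ only if one of them is $\le 1$, so at least one channel $c$ satisfies $\bS{K}_{i_c,j_c} \le \bS{K}_{i_c,k_c}\bS{K}_{k_c,j_c}$; this is exactly the condition $\bS{x}_{j_c} \in \overline{R}_{i_c k_c}$, i.e.\ that $k$ cuts $j$ off in channel $c$.

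To turn this into a genuine contradiction I still need $k$ to be an admissible eliminator inside the channel-$c$ graph, i.e.\ $k \in \mathcal{S}(\bS{x}_{i_c})$, and this is where the hypothesis $j \in \mathcal{S}(\bS{x}_i)$ does real work. From $\bS{K}_{k_c,j_c} \le 1$ the channel inequality forces $\bS{K}_{i_c,k_c} \ge \bS{K}_{i_c,j_c}$, equivalently $\|\bS{x}_{i_c}-\bS{x}_{k_c}\| \le \|\bS{x}_{i_c}-\bS{x}_{j_c}\|$. Since $j \in \mathcal{N}(\bS{x}_{i_c}) \subseteq \mathcal{S}(\bS{x}_{i_c})$, any point at least as close to $i_c$ as $j_c$ also lies in the $K$-nearest set, so $k \in \mathcal{S}(\bS{x}_{i_c})$. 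With $k$ a channel-$c$ candidate cutting $j$ off, I would conclude $j \notin \mathcal{N}(\bS{x}_{i_c})$, the desired contradiction.

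The step I expect to be the main obstacle is this last one: promoting the \emph{pairwise} KRI relation between $j$ and $k$ to a statement about the \emph{full} multi-neighbor polytope in channel $c$. The KRI inequality \eqref{eq:kri} directly governs a two-candidate comparison, whereas $\mathcal{N}(\bS{x}_{i_c})$ is the support of the complete NNK optimization; I must rule out that $k$ is itself eliminated in channel $c$ by some third point while $j$ survives. I would close this using the convex-polytope description of Section~\ref{ssec:nnk}: $\bS{x}_{j_c} \in \overline{R}_{i_c k_c}$ places $\bS{x}_{j_c}$ outside the half-space associated with $k_c$, and combined with the ``at least as close'' fact above this should force $\bS{x}_{j_c}$ outside the channel-$c$ polytope $\bigcap_{m \in \mathcal{N}(\bS{x}_{i_c})} R_{i_c m}$, so that $j$ cannot be a channel-$c$ neighbor.
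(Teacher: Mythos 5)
Your argument is essentially the paper's proof run in contrapositive, not a genuinely different route: the kernel factorization $\bS{K}_{i,j}=\bS{K}_{i_1,j_1}\bS{K}_{i_2,j_2}$, the per-channel KRI inequalities, and the key product fact are identical ingredients (the paper shows that both per-channel ratios exceeding their kernel bounds forces the aggregate inequality; you invoke the logically equivalent statement that if the aggregate inequality fails, one channel's must fail). Even your observation that the eliminator $k$ must satisfy $\bS{K}_{i_c,k_c}\ge\bS{K}_{i_c,j_c}$ (because $\bS{K}_{k_c,j_c}\le 1$), hence $k\in\mathcal{S}(\bS{x}_{i_c})$, mirrors the paper's handling of points missing from one channel's KNN initialization. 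Up to that point your reasoning is sound (modulo ties in the KNN sets, a looseness the paper shares).

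The genuine gap is the last step, which you correctly flag but do not close, and which cannot be closed the way you propose. You need: ``$k\in\mathcal{S}(\bS{x}_{i_c})$ pairwise-eliminates $j$ $\Rightarrow$ $j\notin\mathcal{N}(\bS{x}_{i_c})$.'' But NNK disconnection is governed by the hyperplanes of the \emph{selected} neighbors: the polytope of Section~\ref{ssec:nnk} is the intersection of the half-spaces $R_{i_c m_c}$ over $m\in\mathcal{N}(\bS{x}_{i_c})$ only, and nothing in your argument promotes $k$ from a channel-$c$ \emph{candidate} to a channel-$c$ \emph{selected neighbor} ($k$ being active in the aggregate does not transfer to the channel). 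Indeed, $k$ may itself be cut off in channel $c$ by a third candidate $m$ whose hyperplane does not cut $j$, in which case $j$ stays connected. Concretely, with a Gaussian kernel ($\sigma=1$) and channel-$c$ points $\bS{x}_{i_c}=(0,0)$, $\bS{x}_{m_c}=(1,0)$, $\bS{x}_{k_c}=(1.5,2)$, $\bS{x}_{j_c}=(0.5,3)$: here $m$ eliminates $k$, $k$ eliminates $j$, $m$ does not eliminate $j$, and the NNK solution has active set $\{m,j\}$ --- so $j\in\mathcal{N}(\bS{x}_{i_c})$ even though the closer candidate $k$ pairwise-eliminates it. This falsifies both your final step and the proposed repair (``outside one half-space plus closer $\Rightarrow$ outside the polytope''). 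The asymmetry is exactly why the paper argues in the forward direction: there one only needs that $j$ survives the pairwise test against \emph{every} candidate, a superset of whatever ends up selected, so surviving all of them suffices for connection; one never needs the converse ``eliminated by some candidate $\Rightarrow$ disconnected,'' which is the statement your contraposition hinges on and which is false outside the strict three-node scenario. And if you do restrict everything to the three-node scenario, as the paper's write-up does, then your step holds trivially --- but the contraposition then buys nothing, being the paper's argument stated backwards.
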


\begin{proof}
Consider a three-node scenario with $j$, $k$, and query $i$. We know that $\theta_{i_1,j_1} > 0$ and $\theta_{i_2,j_2} > 0$. Based on the KRI theorem \eqref{eq:kri}:
\begin{equation}\label{eq:sub1}
    \theta_{i_1,j_1} > 0 \Longleftrightarrow \bS{K}_{j_1,k_1} < \frac{\bS{K}_{i_1,j_1}}{\bS{K}_{i_1,k_1}}
\end{equation}

\begin{equation}\label{eq:sub2}
    \theta_{i_2,j_2} > 0 \Longleftrightarrow \bS{K}_{j_2,k_2} < \frac{\bS{K}_{i_2,j_2}}{\bS{K}_{i_2,k_2}}
\end{equation}

This implies that $j$ is not eliminated by any other hyperplane created by a third point $k$ in any of the two channels (and thus it is an NNK neighbor in both channels). 

Then, in the aggregate space we have $\bS{K}_{i,j} = \bS{K}_{i_1,j_1} \bS{K}_{i_2,j_2}$ since $\bS{K}_{i,j} = k(\bS{x}_i, \bS{x}_j) = \text{exp} \left( -\sum_{c=1}^{C} \|\bS{x}_{i_s} - \bS{x}_{j_s}\|^2/2\sigma^2 \right)$ and therefore $j \in \mathcal{N}(\bS{x}_i)$ if 

\begin{equation}\label{eq:agg}
\theta_{i,j} > 0 \Longleftrightarrow \bS{K}_{j_1,k_1}\bS{K}_{j_2,k_2} < \frac{\bS{K}_{i_1,j_1}\bS{K}_{i_2,j_2}}{\bS{K}_{i_1,k_1}\bS{K}_{i_2,k_2}}. 
\end{equation}
Considering \eqref{eq:sub1} and \eqref{eq:sub2}, let $\bS{K}_{j_1,k_1} = a$, $\bS{K}_{j_2,k_2} = b$, $\frac{\bS{K}_{i_1,j_1}}{\bS{K}_{i_1,k_1}} = a + \gamma$ and $\frac{\bS{K}_{i_2,j_2}}{\bS{K}_{i_2,k_2}} = b + \epsilon$. Then, we can substitute terms in \eqref{eq:agg}:
$$ab < (a+\gamma)(b+\epsilon),\; \; \text{so that}$$
$$a\epsilon+b\gamma+\gamma\epsilon > 0 \Longleftrightarrow \theta_{i,j} > 0,$$ 
$$\text{where } 0 \leq a,b \leq 1\text{ and }\gamma,\epsilon > 0\text{. Therefore, }\theta_{i,j} > 0\text{ and }$$
{\begin{equation}
    \boxed{j \in \mathcal{N}(\bS{x}_i) \Longleftrightarrow j \in \mathcal{N}(\bS{x}_{i_1}), j \in \mathcal{N}(\bS{x}_{i_2}), j \in \mathcal{S}(\bS{x}_i) }
\end{equation}}

There are no extra conditions for the initial sets of neighbors $\mathcal{S}(\bS{x}_{i_1})$ and $\mathcal{S}(\bS{x}_{i_2})$. First, we know that $j \in \mathcal{S}(\bS{x}_{i_1}) \cap \mathcal{S}(\bS{x}_{i_2})$. However, if there exists a third point $k \in \mathcal{S}(\bS{x}_i)$ that has not been selected in the initialization of one of the channels, e.g., $k \in \mathcal{S}(\bS{x}_{i_1})$ and $k \notin \mathcal{S}(\bS{x}_{i_2})$, we also want to verify \eqref{eq:agg} for this third point, but $\bS{K}_{j_2,k_2}$ is unknown. We do know, however, that $0 \leq \bS{K}_{i,j} \leq 1$, and $\bS{K}_{i_2,k_2} < \bS{K}_{i_2,j_2}$ because $j \in \mathcal{S}(\bS{x}_{i_2})$ and $k \notin S_2$. Then, $\bS{K}_{j_2,k_2} < \frac{\bS{K}_{i_2,j_2}}{\bS{K}_{i_2,k_2}}$ so \eqref{eq:sub2} is fulfilled and \eqref{eq:sub1} is also fulfilled since $k \in \mathcal{S}(\bS{x}_{i_1})$, therefore \eqref{eq:agg} is also fulfilled. Also, condition $j \in \mathcal{S}(\bS{x}_i)$ can be easily met by selecting $\mathcal{S}(\bS{x}_i) = \mathcal{S}(\bS{x}_{i_1}) \cup \mathcal{S}(\bS{x}_{i_2})$. \qedhere
\end{proof}


\begin{corollary} \label{main_corollary}
$\mathcal{N}(\bS{x}_{i_1}) \cap \mathcal{N}(\bS{x}_{i_2}) \subseteq \mathcal{N}(\bS{x}_i)$ if $\mathcal{N}(\bS{x}_{i_1}) \cap \mathcal{N}(\bS{x}_{i_2})  \subseteq \mathcal{S}(\bS{x}_i)$.
\end{corollary}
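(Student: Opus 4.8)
The plan is to prove the set inclusion by an element-wise argument, reducing the statement to a direct application of Theorem \ref{main_theorem}. Concretely, I would fix an arbitrary $j \in \mathcal{N}(\bS{x}_{i_1}) \cap \mathcal{N}(\bS{x}_{i_2})$ and show that $j \in \mathcal{N}(\bS{x}_i)$. Since $j$ is arbitrary, this establishes $\mathcal{N}(\bS{x}_{i_1}) \cap \mathcal{N}(\bS{x}_{i_2}) \subseteq \mathcal{N}(\bS{x}_i)$, which is exactly the claim.

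First I would unpack the premises required by Theorem \ref{main_theorem}. By the choice of $j$, the two membership conditions $j \in \mathcal{N}(\bS{x}_{i_1})$ and $j \in \mathcal{N}(\bS{x}_{i_2})$ hold immediately. The remaining condition, $j \in \mathcal{S}(\bS{x}_i)$, is precisely what the corollary's hypothesis supplies: because $j$ lies in $\mathcal{N}(\bS{x}_{i_1}) \cap \mathcal{N}(\bS{x}_{i_2})$ and, by assumption, $\mathcal{N}(\bS{x}_{i_1}) \cap \mathcal{N}(\bS{x}_{i_2}) \subseteq \mathcal{S}(\bS{x}_i)$, it follows that $j \in \mathcal{S}(\bS{x}_i)$. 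With all three premises verified, I would invoke Theorem \ref{main_theorem} to conclude $j \in \mathcal{N}(\bS{x}_i)$, and the arbitrariness of $j$ then yields the inclusion.

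There is no substantive obstacle here: the corollary is essentially a set-theoretic repackaging of the theorem, and all the analytic content (the KRI-based computation $\bS{K}_{i,j} = \bS{K}_{i_1,j_1}\bS{K}_{i_2,j_2}$ and the positivity argument $a\epsilon + b\gamma + \gamma\epsilon > 0$ in the aggregate space) has already been discharged in the proof of Theorem \ref{main_theorem}. The only point meriting care is recognizing that the single inclusion hypothesis $\mathcal{N}(\bS{x}_{i_1}) \cap \mathcal{N}(\bS{x}_{i_2}) \subseteq \mathcal{S}(\bS{x}_i)$ is the device that guarantees the per-element condition $j \in \mathcal{S}(\bS{x}_i)$ \emph{uniformly} for every $j$ in the intersection, so that Theorem \ref{main_theorem} can be applied to each element without any further verification.
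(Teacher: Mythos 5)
Your proof is correct and matches the paper's own argument: the paper likewise argues element-wise that any $j$ in $\mathcal{N}(\bS{x}_{i_1}) \cap \mathcal{N}(\bS{x}_{i_2})$ satisfies the channel conditions \eqref{eq:sub1} and \eqref{eq:sub2}, hence the aggregate condition \eqref{eq:agg}, which is exactly the content of Theorem \ref{main_theorem} that you invoke. The corollary is indeed just a set-theoretic repackaging of the theorem, and your proposal captures that.
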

Any point $j$ such that $j \in \mathcal{N}(\bS{x}_{i_1}) \cap \mathcal{N}(\bS{x}_{i_2})$ verifies both \eqref{eq:sub1} and  \eqref{eq:sub2}. Thus, \eqref{eq:agg} is verified for $j$ as well.
By constructing only $C$ graphs, i.e., one per channel, Theorem \ref{main_theorem} and Corollary \ref{main_corollary} allow us to determine, from the intersections of neighborhoods in different channels ($j \in \mathcal{N}(\bS{x}_{i_1}) \cap \mathcal{N}(\bS{x}_{i_2})$) properties of the neighborhood in the aggregate ($\mathcal{N}(\bS{x}_i)$)  
\textit{without having to construct the aggregate NNK graph, which involves additional complexity}. Thus, $C$ graph constructions provide a lower bound on the overall overlap, without the need to construct $\sum_{n=2}^{C} \binom{C}{n}$ graphs to find the neighbors in all possible aggregations of channels, which would be impractical. 

\begin{theorem}
In a three-node scenario, if $j \in \mathcal{N}(\bS{x}_{i_1}) \cap \in \mathcal{N}(\bS{x}_{i_2})$ and we have that $k \notin \mathcal{N}(\bS{x}_{i_1})$ and $k \notin \mathcal{N}(\bS{x}_{i_2})$, then $j$ eliminates $k$ in both channels and $k \notin \mathcal{N}(\bS{x}_i)$.
\end{theorem}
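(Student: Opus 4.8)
The plan is to mirror the argument of Theorem~\ref{main_theorem}, replacing each survival inequality by its reverse. First I would translate the hypotheses into KRI form. Since we are in a three-node scenario, the only candidate that can disconnect $k$ from the query $i$ in channel $c$ is $j$; hence $k \notin \mathcal{N}(\bS{x}_{i_1})$ together with $j \in \mathcal{N}(\bS{x}_{i_1})$ forces $j$ to be the point that eliminates $k$ in channel~$1$, and likewise in channel~$2$. This already establishes the first half of the conclusion. Concretely, the failure of $k$ to survive in channel $c$ is exactly the failure of the survival condition of the form \eqref{eq:sub1}--\eqref{eq:sub2} applied to $k$ (using the symmetry $\bS{K}_{j_c,k_c}=\bS{K}_{k_c,j_c}$), namely
\[
\theta_{i_1,k_1} = 0 \Longleftrightarrow \bS{K}_{j_1,k_1} \geq \frac{\bS{K}_{i_1,k_1}}{\bS{K}_{i_1,j_1}}, \qquad \theta_{i_2,k_2} = 0 \Longleftrightarrow \bS{K}_{j_2,k_2} \geq \frac{\bS{K}_{i_2,k_2}}{\bS{K}_{i_2,j_2}} .
\]

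Next I would reuse the kernel factorization from Theorem~\ref{main_theorem}: because the Gaussian kernel \eqref{eq:gaussian} factorizes across channels, $\bS{K}_{i,j} = \bS{K}_{i_1,j_1}\bS{K}_{i_2,j_2}$, $\bS{K}_{i,k} = \bS{K}_{i_1,k_1}\bS{K}_{i_2,k_2}$, and $\bS{K}_{j,k} = \bS{K}_{j_1,k_1}\bS{K}_{j_2,k_2}$. By the three-node KRI, the target claim $k \notin \mathcal{N}(\bS{x}_i)$ is equivalent to the aggregate elimination condition $\bS{K}_{j,k} \geq \bS{K}_{i,k}/\bS{K}_{i,j}$. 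Since every kernel value is strictly positive, I can multiply the two per-channel inequalities termwise without reversing their direction, obtaining
\[
\bS{K}_{j_1,k_1}\bS{K}_{j_2,k_2} \;\geq\; \frac{\bS{K}_{i_1,k_1}}{\bS{K}_{i_1,j_1}} \cdot \frac{\bS{K}_{i_2,k_2}}{\bS{K}_{i_2,j_2}} \;=\; \frac{\bS{K}_{i,k}}{\bS{K}_{i,j}} ,
\]
whose left-hand side is exactly $\bS{K}_{j,k}$. This is precisely the aggregate elimination condition, so $\theta_{i,k} = 0$ and $k \notin \mathcal{N}(\bS{x}_i)$, completing the argument.

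The only genuinely delicate point is the bookkeeping of inequality directions: unlike Theorem~\ref{main_theorem}, here I am combining \emph{reversed} KRI inequalities, so I must verify that multiplying $a_1 \geq r_1 > 0$ and $a_2 \geq r_2 > 0$ yields $a_1 a_2 \geq r_1 r_2$ (it does, by positivity of all kernel values), and that the product $r_1 r_2$ regroups into the aggregate ratio $\bS{K}_{i,k}/\bS{K}_{i,j}$ rather than its reciprocal. I would also state explicitly why, in the three-node setting, ``not an NNK neighbor'' coincides with ``eliminated by $j$'': with only $i$, $j$, and $k$ present, $j$ is the sole possible eliminator of $k$, which is what licenses reading each $k \notin \mathcal{N}(\bS{x}_{i_c})$ as the reverse of \eqref{eq:sub1}--\eqref{eq:sub2}. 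No extra conditions on the initialization sets $\mathcal{S}$ are required here, since the three-node restriction already fixes the candidate pool.
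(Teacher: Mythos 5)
Your proof is correct and takes essentially the same route as the paper's: both translate the per-channel eliminations of $k$ into reversed KRI inequalities, exploit the factorization $\bS{K}_{i,j} = \bS{K}_{i_1,j_1}\bS{K}_{i_2,j_2}$ of the Gaussian kernel across channels, and multiply the two channel-wise inequalities (direction preserved by positivity of kernel values) to conclude that the aggregate elimination condition holds, hence $\theta_{i,k}=0$. The only cosmetic differences are that you combine the inequalities directly where the paper introduces the parametrization $a$, $b$, $a+\gamma$, $b+\epsilon$, and that you spell out explicitly why the three-node restriction forces $j$ to be the eliminator in each channel, a point the paper takes for granted.
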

\begin{proof}
If $\theta_{i_1,k_1} = 0$ because of the hyperplane created by $\theta_{i_1,j_1} > 0$, and $\theta_{i_2,k_2} = 0$ because of the hyperplane created by $\theta_{i_2,j_2} > 0$, from the KRI theorem \eqref{eq:kri}:
\begin{equation}\label{eq:zsub1}
    \theta_{i_1,k_1} = 0 \Longleftrightarrow
    \frac{1}{\bS{K}_{j_1,k_1}} < \frac{\bS{K}_{i_1,j_1}}{\bS{K}_{i_1,k_1}}
\end{equation}
\begin{equation}\label{eq:zsub2}
    \theta_{i_2,k_2} = 0 \Longleftrightarrow
    \frac{1}{\bS{K}_{j_2,k_2}} < \frac{\bS{K}_{i_2,j_2}}{\bS{K}_{i_2,k_2}}
\end{equation}
Then, in the aggregate, $k \in \mathcal{N}(\bS{x}_i)$ if
\begin{equation}\label{eq:zagg}
\theta_{i,k} > 0 \Longleftrightarrow 
\frac{1}{\bS{K}_{j_1,k_1}\bS{K}_{j_2,k_2}} >
\frac{\bS{K}_{i_1,j_1}\bS{K}_{i_2,j_2}}{\bS{K}_{i_1,k_1}\bS{K}_{i_2,k_2}}.
\end{equation}
Considering \eqref{eq:zsub1} and \eqref{eq:zsub2}, let $\frac{1}{\bS{K}_{j_1,k_1}} = a$, $\frac{1}{\bS{K}_{j_2,k_2}} = b$, $\frac{\bS{K}_{i_1,j_1}}{\bS{K}_{i_1,k_1}} = a + \gamma$ and $\frac{\bS{K}_{i_2,j_2}}{\bS{K}_{i_2,k_2}} = b + \epsilon$:
\vspace{-10pt}
$$ab \stackrel{?}{>} (a+\gamma)(b+\epsilon)$$
$$a\epsilon+b\gamma+\gamma\epsilon \nless 0$$
$$\text{where } 0 \leq a,b \leq 1\text{ and }\gamma,\epsilon > 0\text{. Thus, }\theta_{i,k} = 0\text{ and }k \notin \mathcal{N}(\bS{x}_i).$$
In words, if a point $k$ is eliminated by a hyperplane created by the same point $j$ in all subvectors, $k$ will not be connected as an NNK neighbor in the aggregate space.
\qedhere
\end{proof}

\begin{lemma}
If $k \notin \mathcal{N}(\bS{x}_{i_1})$ and $k \notin \mathcal{N}(\bS{x}_{i_2})$, but in a three-node scenario with $i$ and $j$, $j$ only eliminates $k$ in one channel,  $\theta_{i_1,k_1} = 0$ and $\theta_{i_2,k_2} > 0$, then it is possible that $k \in \mathcal{N}(\bS{x}_i)$.
\end{lemma}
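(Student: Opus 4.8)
The plan is to prove this possibility statement the same way the two preceding theorems are argued, namely at the level of kernel values, and then to exhibit one admissible configuration in which $k$ survives in the aggregate. First I would translate the two hypotheses into Kernel Ratio Interval inequalities. The assumption $\theta_{i_1,k_1}=0$ (elimination in channel $1$) gives, exactly as in \eqref{eq:zsub1}, $\tfrac{1}{\bS{K}_{j_1,k_1}} < \tfrac{\bS{K}_{i_1,j_1}}{\bS{K}_{i_1,k_1}}$, whereas the assumption $\theta_{i_2,k_2}>0$ (retention in channel $2$) is the right-hand inequality of the KRI \eqref{eq:kri}, i.e. $\tfrac{\bS{K}_{i_2,j_2}}{\bS{K}_{i_2,k_2}} < \tfrac{1}{\bS{K}_{j_2,k_2}}$. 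Writing $a_1=\bS{K}_{j_1,k_1}$, $a_2=\bS{K}_{j_2,k_2}$, $r_1=\bS{K}_{i_1,j_1}/\bS{K}_{i_1,k_1}$ and $r_2=\bS{K}_{i_2,j_2}/\bS{K}_{i_2,k_2}$ (all positive, with $0<a_1,a_2\le 1$), these read $r_1 a_1 > 1$ and $r_2 a_2 < 1$.

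Next I would record the aggregate membership test, which is exactly \eqref{eq:zagg} applied to $k$: since $\bS{K}_{i,k}=\bS{K}_{i_1,k_1}\bS{K}_{i_2,k_2}$ and $\bS{K}_{j,k}=\bS{K}_{j_1,k_1}\bS{K}_{j_2,k_2}$, in the three-node scenario $k\in\mathcal{N}(\bS{x}_i)$ holds iff $j$ does not eliminate $k$ in the aggregate, i.e. iff $r_1 r_2 a_1 a_2 < 1$. The core observation is then immediate: the aggregate quantity factors as $(r_1 a_1)(r_2 a_2)$, the product of one term strictly above $1$ and one strictly below $1$, so its value is \emph{not} fixed by the sign constraints. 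Writing $r_1 a_1 = 1+\delta$ with $\delta>0$ and $r_2 a_2=\rho$ with $0<\rho<1$, I get $r_1 r_2 a_1 a_2=(1+\delta)\rho$, which is below $1$ exactly when $\rho<\tfrac{1}{1+\delta}$. Because $\rho$ ranges freely over $(0,1)$ while $\tfrac{1}{1+\delta}\in(0,1)$, admissible choices with $\rho<\tfrac{1}{1+\delta}$ exist and give $k\in\mathcal{N}(\bS{x}_i)$, proving the claim; choices with $\rho\ge\tfrac{1}{1+\delta}$ give $k\notin\mathcal{N}(\bS{x}_i)$, confirming that the outcome is genuinely undetermined. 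To make the statement concrete I would pin down a single instance, e.g. $a_1=a_2=\tfrac12$, $r_1=3$ (so $r_1a_1=\tfrac32>1$) and $r_2=0.6$ (so $a_2<r_2<\tfrac{1}{a_2}$ and $r_2a_2=0.3<1$), for which $r_1 r_2 a_1 a_2=0.45<1$.

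It is worth highlighting the contrast with the preceding theorem that handles elimination in \emph{both} channels: there both factors exceed $1$, the product is forced above $1$, and $k$ is necessarily disconnected, whereas here the channels disagree, the two factors pull in opposite directions, and no determinate conclusion is available — which is precisely why the statement is phrased as a possibility rather than an implication. The only genuinely delicate step is realizability: unlike the inequality manipulation, which is routine, I must check that at least one admissible tuple of kernel values actually arises from three points in $\mathbb{R}^{D_1}$ and $\mathbb{R}^{D_2}$. I expect this to be the main (and only) obstacle, and I would dispatch it by noting that the three prescribed Gaussian-kernel values in each channel correspond to three pairwise distances, and any such distances satisfying the triangle inequality embed in a plane inside $\mathbb{R}^{D_c}$; verifying the triangle inequality for the concrete instance above then certifies that the configuration is realized by genuine feature vectors, completing the argument.
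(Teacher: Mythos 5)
Your proof is correct and takes essentially the same route as the paper's: translate the hypotheses into KRI inequalities and observe that the aggregate elimination test factors into one term forced above $1$ and one forced below $1$ (the paper's additive version of this is $a\epsilon \stackrel{?}{>} b\gamma$ with $\gamma,\epsilon>0$), so the outcome is undetermined and retention in the aggregate is possible. Your explicit numerical witness and the realizability check via pairwise distances go slightly beyond the paper, which stops at the undetermined inequality --- a welcome tightening, since a possibility claim strictly requires exhibiting an admissible instance, and your multiplicative parametrization also avoids the paper's misstated constraint $0 \leq a,b \leq 1$ (there $a = 1/\bS{K}_{j_1,k_1} \geq 1$).
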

\begin{proof}
Consider node $k$ which is not selected as an NNK neighbor in any of the two channels. In a three-node scenario with points $i$ and $j$ in channel 1, $\theta_{i_1,k_1} = 0$ because of the hyperplane created by $\theta_{i_1,j_1}$. But in a three-node scenario with the same points in channel 2, $\theta_{i_2,k_2} > 0$, so $k$ is eliminated from $\mathcal{N}(\bS{x}_{i_2})$, not because of the hyperplane created by $\theta_{i_2,j_2}$, but by the hyperplane created by a fourth point $q$, denoted by its normal $\theta_{i_2,q_2} > 0$. 

Then, consider \eqref{eq:zagg} and let $\frac{1}{\bS{K}_{j_1,k_1}} = a$, $\frac{1}{\bS{K}_{j_2,k_2}} = b+\epsilon$, $\frac{\bS{K}_{i_1,j_1}}{\bS{K}_{i_1,k_1}} = a + \gamma$ and $\frac{\bS{K}_{i_2,j_2}}{\bS{K}_{i_2,k_2}} = b$. Then,
\vspace{-7pt}
$$a(b+\epsilon) \stackrel{?}{>} (a+\gamma)b$$
\vspace{-13pt}
$$a\epsilon \stackrel{?}{>} b\gamma$$

where $0 \leq a,b \leq 1$ and $\gamma,\epsilon > 0$ considering \eqref{eq:zsub1} and \eqref{eq:zsub2}.
Therefore, a node $k$ that is not an NNK neighbor of $i$ in any channel, but has not been eliminated by the same node $j$ in all channels can still be selected as NNK neighbor in the aggregate space. \qedhere
\end{proof}

\section{Experiments}
\label{sec:exp}

We use CW-NNK graphs to study the overlap between channels within a CNN layer and how this overlap affects the dimensionality of the layer.
We consider a 7 layer CNN model composed of 6 convolutional layers of 16 depth channels with ReLU activations, interleaving max-pooling every 2 layers and 1 fully connected softmax layer. A dropout layer \cite{srivastava2014dropout} is added after the ReLU activations for regularization.
Models are trained on the well-known CIFAR-10 dataset \cite{krizhevsky2009learning} for 200 epochs with the Adam optimizer with a learning rate of 0.001, and a batch size of 50.
A class-balanced subset of 1000 randomly sampled data points from the original train set is used to construct CW-NNK graphs. Given a query data point, the rest of the subset is used as train data for graph construction. The intermediate representations of the input data (feature maps) at the output of a convolutional block are used as feature vectors.

\subsection{Channel redundancy across CNN layers}
\label{ssec:redundancy_layers}


Section \ref{ssec:cw_analysis} developed properties showing that when an NNK neighbor is common across two channels it will also be an NNK neighbor in the aggregation of those two channels, without having to construct the graph for the aggregation. 
We next show empirically that these overlaps of neighborhoods in two channels for a given instance do happen in practice. 
%
For each convolutional layer, we compute the ratio between the number of pairwise channel NNK intersections and the the average number of NNK neighbors per channel (CW-NNK overlap), and we average it for all the data points in the subset. \autoref{fig:intersections_across_layers} shows that there is indeed a significant number of neighbors overlapping in different channels in the shallower layers. The CW-NNK overlap is reduced in the deeper layers, reaching a minimal point in the last convolutional layer, denoting more independence between channels. This confirms what has been observed in other works that study channel redundancy by different means \cite{shang2016understanding, he2017channel}.

\begin{figure}[!ht]
    \centering
    \includegraphics[width=\linewidth]{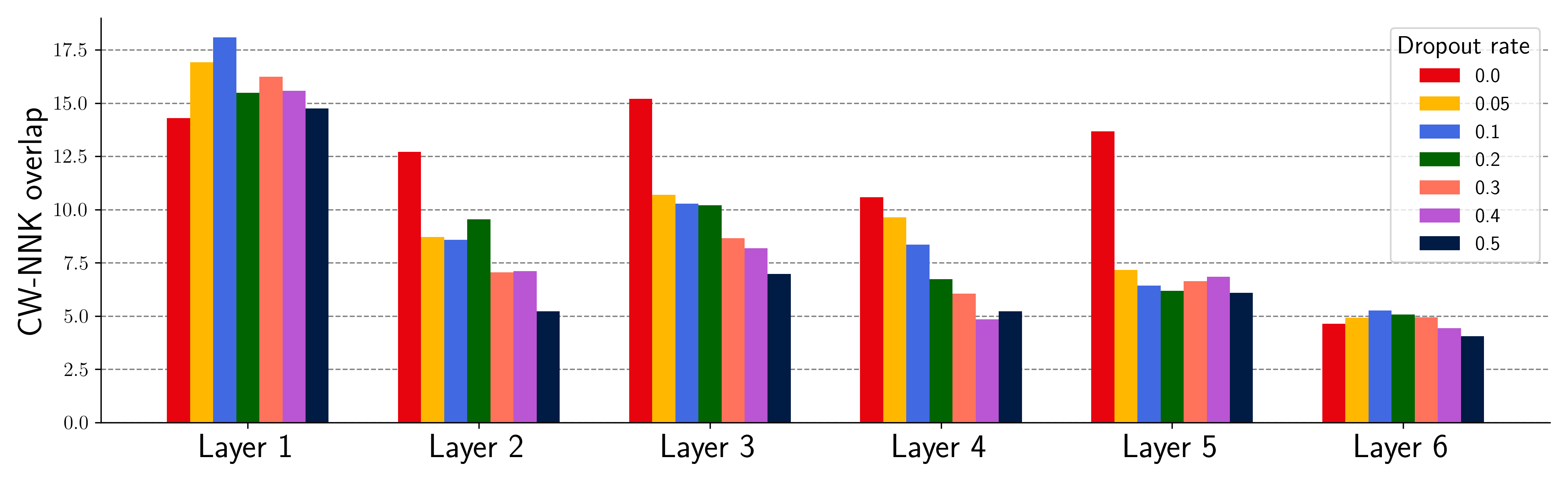}
    \vspace{-15pt}
    \caption{Number of pairwise channel NNK intersections to average number of NNK neighbors per channel ratio (CW-NNK overlap) in each layer for different dropout rates. 
    }
    \vspace{-13pt}
    \label{fig:intersections_across_layers}
\end{figure}

We also observe that channel redundancy in the intermediate layers (second to penultimate layer) depends on the level of model regularization. Higher dropout rates lead to channels of intermediate layers having less overlap, while lack of  regularization results in higher overlap. 
These observations are promising for further improvement of channel pruning techniques and redundancy constrains to improve training.
However, less overlap between channels does not necessarily imply higher generalization. As shown in \cite{kahatapitiya2021exploiting}, over-specialized channels can lead to worse performance \cite{kahatapitiya2021exploiting}, and a controlled level of redundancy is preferable to maintain performance. We discuss the CW-NNK overlap in the last convolution layer and its relation to ID and generalization next. 

\subsection{CW-NNK overlap and ID as generalization indicators}

\label{ssec:overlap_generalization}
\begin{figure}[hb]
\centering
\begin{subfigure}[b]{.48\linewidth}
  \centering
  \includegraphics[width=\linewidth]{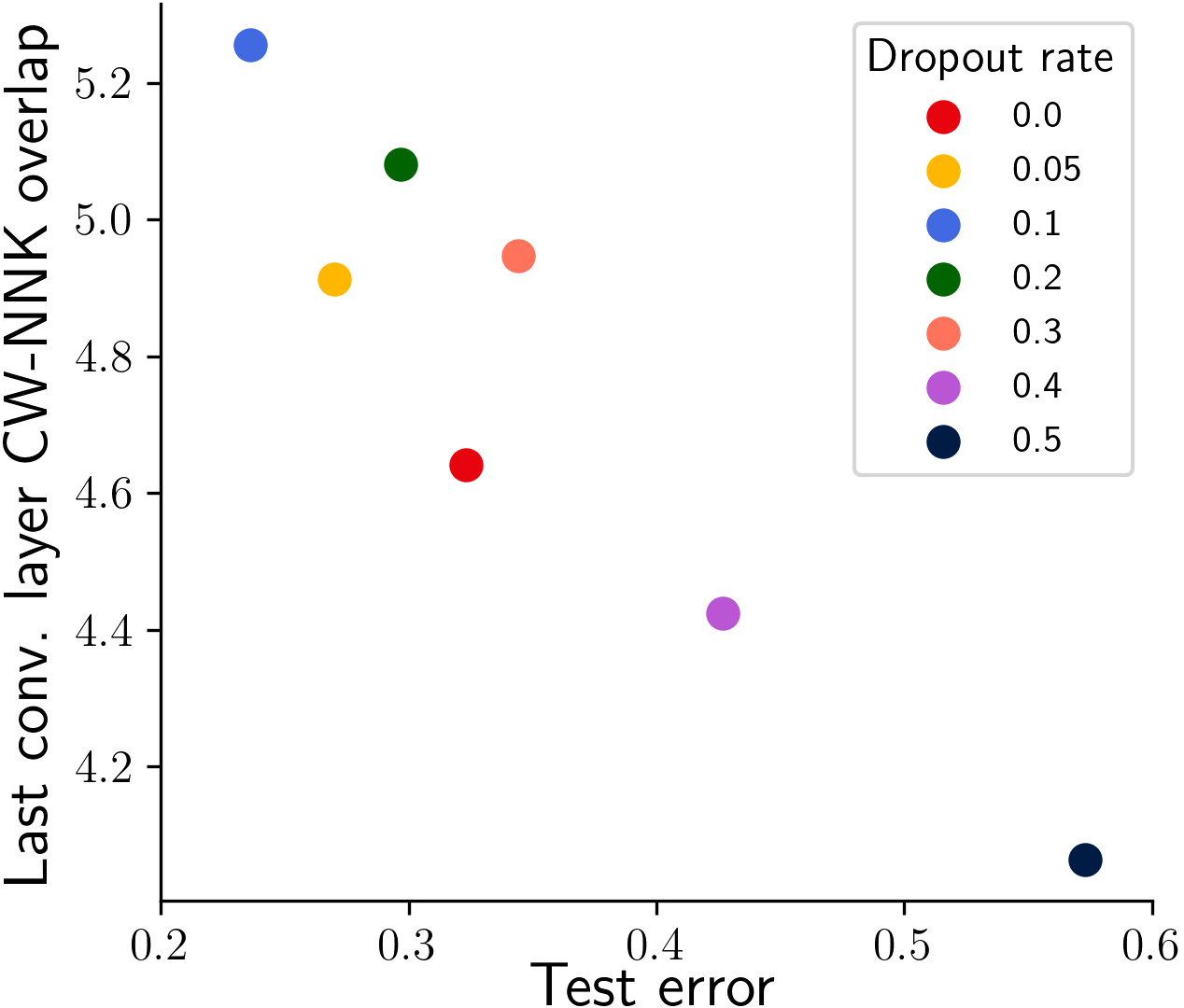}
  \vspace{-10pt}
  \caption{}
  \label{fig:test_error_overlap}
\end{subfigure}
\begin{subfigure}[b]{0.48\linewidth}
  \centering
  \includegraphics[width=0.8\linewidth]{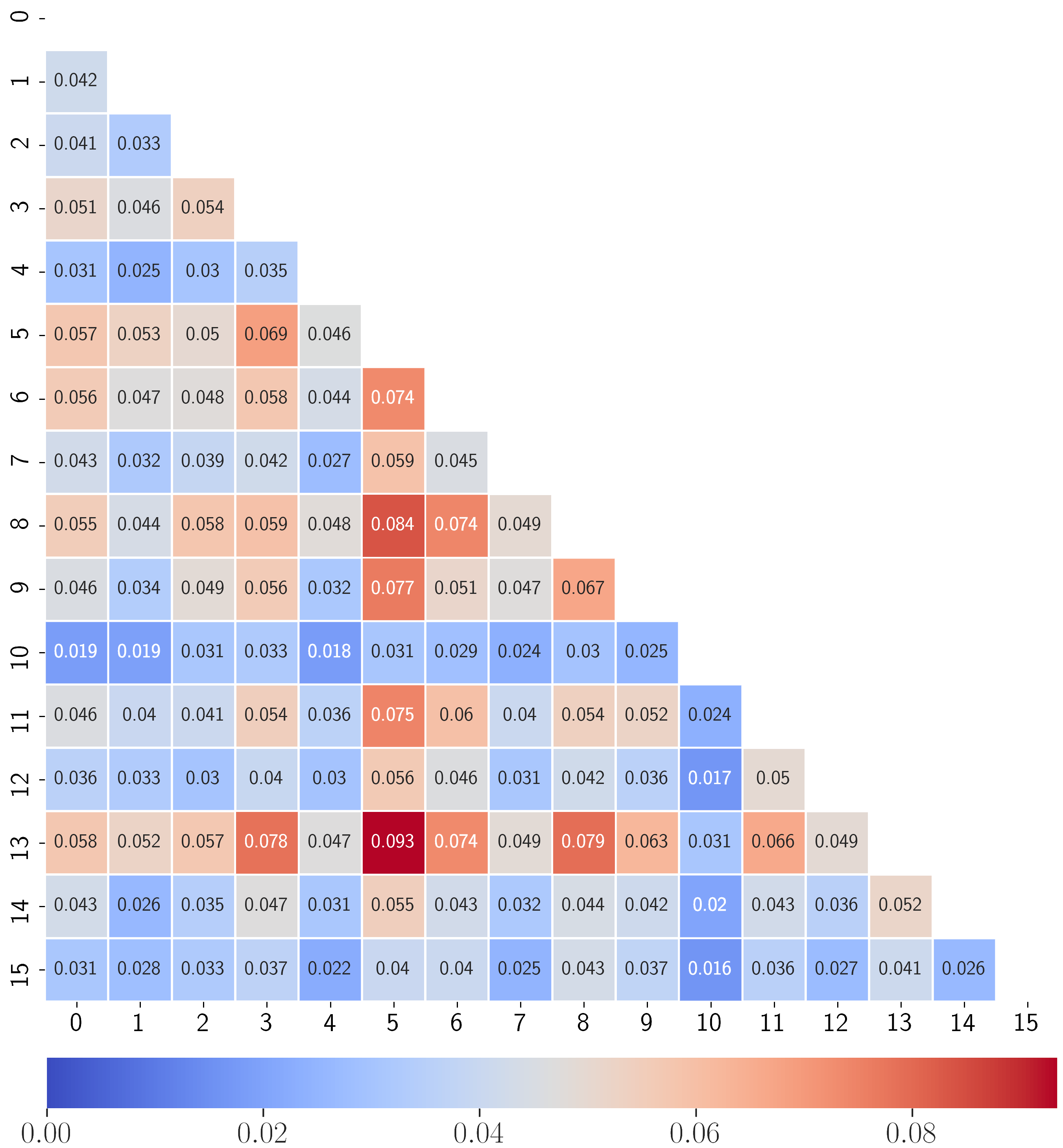}
  \caption{}
  \label{fig:intersections_matrix}
\end{subfigure}
\vspace{-8pt}
\caption{(a) Last convolutional layer CW-NNK overlap as a function of model test error. (b) CW-NNK overlap for all pairs of channels in the last convolutional layer.}
\label{fig:last_layer_overlap}
\vspace{-3pt}
\end{figure}

As observed in \cite{ansuini2019intrinsic}, the ID in the last layer on the training set is a strong predictor for generalization performance. Between multiple models, lower IDs strongly correlate with higher classification accuracy. 
In general, the information overlap between channels plays a key role in the estimation of the overall ID of the data manifold. The ID of an aggregate of multiple channels may be additive if there is no overlap between them, while if high overlap is observed, the ID of the aggregate could be similar to the ID of a single channel, related to the average number of CW-NNK neighbors. 
We compare the CW-NNK overlap in the last convolutional layer
with the test error of our model using different dropout rates. Since the models are trained for 200 epochs without early stopping, lack of regularization leads to overfitting to the training data, i.e., high test error. However, an excessive regularization, i.e., high dropout rates, can cause underfitting \cite{srivastava2014dropout}, leading to both train and test errors to be high. \autoref{fig:test_error_overlap} shows how the CW-NNK overlap strongly correlates with test error, demonstrating that a high CW-NNK overlap in the last convolutional layer is indicative of a lower ID of the aggregate feature vector, and thus, of higher generalization performance. 
The last convolutional layer results do not contradict the observations in Section \ref{ssec:redundancy_layers}, since less redundancy in intermediate layers, owing to regularization, does not necessarily affect generalization performance.

\vspace{-2pt}
\subsection{CW-NNK overlap distribution and examples}

Given that there is a considerable overlap of neighbors between different channels in a layer, we want to observe whether these intersections are randomly distributed among all channels or follow a pattern.
\autoref{fig:intersections_matrix} represents the CW-NNK overlap for all pairs of channels in the last convolutional layer of the trained model that achieves highest test accuracy, with a dropout rate of 0.1. It is important to note that with only $C$ graphs we can observe similarities beyond pairs of channels (i.e., neighbors that intersect in trios, quartets, etc., of channels) and in this matrix we represent all pairwise combinations of channels that occur.
We observe that, in both layers, most of the intersections are concentrated in a few channels, which capture similar features. On the other hand, other channels are very specialized and independent, and since the CW-NNK overlap is very low, the captured features differ from those of other channels. 

A major advantage of this method is that the feature overlap can be interpreted directly from the input images. \autoref{fig:neighbors} shows the sets of neighbors of three channels of the last convolutional layer for a given query. We can observe how three images appear in the neighborhood of the first two channels. All the other neighbors selected in these channels share similar general features (vehicles, light or blue backgrounds, etc.). However, the bottom channel does not overlap with the other two, and we can confirm that the channel captures very different features from the input data, selecting data points from other non-vehicle classes as neighbors with very different features.

\begin{figure}[!ht]
    \centering
    \includegraphics[width=0.95\linewidth]{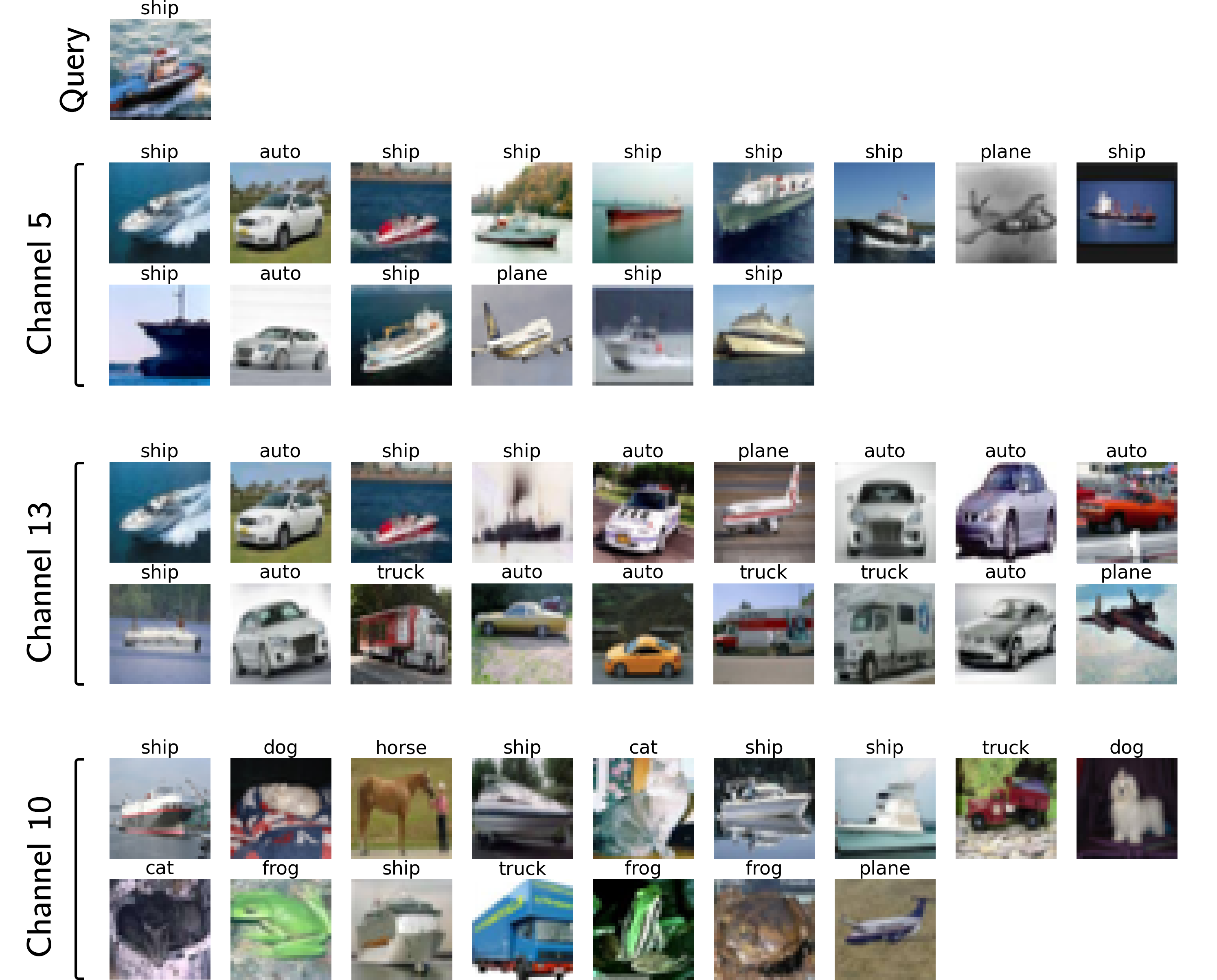}
    \vspace{-2pt}
    \caption{Sets of neighbors for three channels of the last convolutional layer for a given query. Channels 5 and 13 overlap and share similar features while channel 10 captures very different features.}
    \label{fig:neighbors}
\end{figure}


\vspace{-10pt}
\section{Conclusion}
\label{sec:conclusion}

We study the conditions that allow computing the overlap between channels efficiently through CW-NNK graphs, and which effect it has on the overall ID of the data. We discuss the impact of regularization and layer depth on channel redundancy.
Finally, we observe that CW-NNK overlap in the last convolutional layer is strongly correlated with generalization performance.
Future work will consider using the level of overlap for channel pruning and to improve generalization estimation for channel-wise early stopping \cite{bonet2021channel}.

\newpage
\vfill\pagebreak

\bibliographystyle{myIEEEbib}
\bibliography{refs}

\end{document}